\pdfoutput=1
\documentclass[english]{article}
\usepackage[T1]{fontenc}
\usepackage[latin9]{inputenc}
\setcounter{secnumdepth}{3}
\setcounter{tocdepth}{3}
\usepackage{array}
\usepackage{float}
\usepackage{url}
\usepackage{multirow}
\usepackage{amsmath}
\usepackage{amsthm}
\usepackage{amssymb}
\usepackage{stackrel}
\usepackage{graphicx}
 \usepackage[numbers]{natbib}
\makeatletter

\providecommand{\tabularnewline}{\\}
\floatstyle{ruled}
\newfloat{algorithm}{tbp}{loa}
\providecommand{\algorithmname}{Algorithm}
\floatname{algorithm}{\protect\algorithmname}

\theoremstyle{plain}
\newtheorem{thm}{\protect\theoremname}
\theoremstyle{plain}
\newtheorem{lem}[thm]{\protect\lemmaname}

\usepackage[preprint]{nips_2018}
\usepackage{algorithm,algpseudocode}
\usepackage{amsmath}
\algnewcommand\algorithmicforeach{\textbf{for each}}
\algdef{S}[FOR]{ForEach}[1]{\algorithmicforeach\ #1\ \algorithmicdo}

\makeatother

\usepackage{babel}
\providecommand{\lemmaname}{Lemma}
\providecommand{\theoremname}{Theorem}

\begin{document}

\title{Variational Memory Encoder-Decoder}

\author{Hung Le, Truyen Tran, Thin Nguyen and Svetha Venkatesh\\
Applied AI Institute, Deakin University, Geelong, Australia\\
\texttt{\{lethai,truyen.tran,thin.nguyen,svetha.venkatesh\}@deakin.edu.au}}
\maketitle
\begin{abstract}
Introducing variability while maintaining coherence is a core task
in learning to generate utterances in conversation. Standard neural
encoder-decoder models and their extensions using conditional variational
autoencoder often result in either trivial or digressive responses.
To overcome this, we explore a novel approach that injects variability
into neural encoder-decoder via the use of external memory as a mixture
model, namely Variational Memory Encoder-Decoder (VMED). By associating
each memory read with a mode in the latent mixture distribution at
each timestep, our model can capture the variability observed in sequential
data such as natural conversations. We empirically compare the proposed
model against other recent approaches on various conversational datasets.
The results show that VMED consistently achieves significant improvement
over others in both metric-based and qualitative evaluations. 
 
\end{abstract}

\section{Introduction}

Recent advances in generative modeling have led to exploration of
generative tasks. While generative models such as GAN \cite{goodfellow2014generative}
and VAE \cite{king14autoencoder,rezende2014stochastic} have been
applied successfully for image generation, learning generative models
for sequential discrete data is a long-standing problem. Early attempts
to generate sequences using RNNs \cite{graves2013generating} and
neural encoder-decoder models \cite{kalchbrenner2013recurrent,vinyals2015neural}
gave promising results, but the deterministic nature of these models
proves to be inadequate in many realistic settings. Tasks such as
translation, question-answering and dialog generation would benefit
from stochastic models that can produce a variety of outputs for an
input. For example, there are several ways to translate a sentence
from one language to another, multiple answers to a question and multiple
responses for an utterance in conversation. 

Another line of research that has captured attention recently is memory
augmented neural networks (MANNs). Such models have larger memory
capacity and thus ``remember'' temporally distant information in
the input sequence and provide a RAM-like mechanism to support model
execution. MANNs have been successfully applied to long sequence prediction
tasks \cite{graves2016hybrid,NIPS2015_5846} demonstrating great improvement
when compared to other recurrent models. However, the role of memory
in sequence generation has not been well understood. 

For tasks involving language understanding and production, handling
intrinsic uncertainty and latent variations is necessary. The choice
of words and grammars may change erratically depending on speaker
intentions, moods and previous languages used. The underlying RNN
in neural sequential models finds it hard to capture the dynamics
and their outputs are often trivial or too generic \cite{li2016diversity}.
One way to overcome these problems is to introduce variability into
these models. Unfortunately, sequential data such as speech and natural
language is a hard place to inject variability \cite{serban2017hierarchical}
since they require a coherence of grammars and semantics yet allow
freedom of word choice.

We propose a novel hybrid approach that integrates MANN and VAE, called
Variational Memory Encoder-Decoder (VMED), to model the sequential
properties and inject variability in sequence generation tasks. We
introduce latent random variables to model the variability observed
in the data and capture dependencies between the latent variables
across timesteps. Our assumption is that there are latent variables
governing an output at each timestep. In the conversation context,
for instance, the latent space may represent the speaker's hidden
intention and mood that dictate word choice and grammars. For a rich
latent multimodal space, we use a Mixture of Gaussians (MoG) because
a spoken word's latent intention and mood can come from different
modes, e.g., whether the speaker is asking or answering, or she/he
is happy or sad. By modeling the latent space as an MoG where each
mode associates with some memory slot, we aim to capture multiple
modes of the speaker's intention and mood when producing a word in
the response. Since the decoder in our model has multiple read heads,
the MoG can be computed directly from the content of chosen memory
slots. Our external memory plays a role as a mixture model distribution
generating the latent variables that are used to produce the output
and take part in updating the memory for future generative steps. 

To train our model, we adapt Stochastic Gradient Variational Bayes
(SGVB) framework \cite{king14autoencoder}. Instead of minimizing
the $KL$ divergence directly, we resort to using its variational
approximation \cite{hershey2007approximating} to accommodate the
MoG in the latent space. We show that minimizing the approximation
results in $KL$ divergence minimization. We further derive an upper
bound on our total timestep-wise $KL$ divergence and demonstrate
that minimizing the upper bound is equivalent to fitting a continuous
function by a scaled MoG. We validate the proposed model on the task
of conversational response generation. This task serves as a nice
testbed for the model because an utterance in a conversation is conditioned
on previous utterances, the intention and the mood of the speaker.
Finally, we evaluate our model on two open-domain and two closed-domain
conversational datasets. The results demonstrate our proposed VMED
gains significant improvement over state-of-the-art alternatives.

\section{Preliminaries}

\subsection{Memory-augmented Encoder-Decoder Architecture}

A memory-augmented encoder-decoder (MAED) consists of two neural controllers
linked via external memory. This is a natural extension to read-write
MANNs to handle sequence-to-sequence problems. In MAED, the memory
serves as a compressor that encodes the input sequence to its memory
slots, capturing the most essential information. Then, a decoder will
attend to these memory slots looking for the cues that help to predict
the output sequence. MAED has recently demonstrated promising results
in machine translation \cite{britz2017efficient,wang2016memory} and
healthcare \cite{le2018dual_con,Le:2018:DMN:3219819.3219981,prakash2017condensed}.
In this paper, we advance a recent MAED known as DC-MANN described
in \cite{le2018dual_con} where the powerful DNC \cite{graves2016hybrid}
is chosen as the external memory. In DNC, memory accesses and updates
are executed via the controller's reading and writing heads at each
timestep. Given current input $x_{t}$ and a set of $K$ previous
read values from memory $r_{t-1}=\left[r_{t-1}^{1},r_{t-1}^{2},...,r_{t-1}^{K}\right]$,
the controllers compute read-weight vector $w_{t}^{i,r}$ and write-weight
vector $w_{t}^{w}$ for addressing the memory $M_{t}$. There are
two versions of decoding in DC-MANN: write-protected and writable
memory. We prefer to allow writing to the memory during inference
because in this work, we focus on generating diverse output sequences,
which requires a dynamic memory for both encoding and decoding process. 

\subsection{Conditional Variational Autoencoder (CVAE) for Conversation Generation}

A dyadic conversation can be represented via three random variables:
the conversation context $x$ (all the chat before the response utterance),
the response utterance $y$ and a latent variable $z$, which is used
to capture the latent distribution over the reasonable responses.
A variational autoencoder conditioned on $x$ (CVAE) is trained to
maximize the conditional log likelihood of $y$ given $x$, which
involves an intractable marginalization over the latent variable $z$,
i.e.,:

\begin{equation}
p\left(y\mid x\right)=\int_{z}p\left(y,z\mid x\right)dz=\int_{z}p\left(y\mid x,z\right)p\left(z\mid x\right)dz
\end{equation}
Fortunately, CVAE can be efficiently trained with the Stochastic Gradient
Variational Bayes (SGVB) framework \cite{king14autoencoder} by maximizing
the variational lower bound of the conditional log likelihood. In
a typical CVAE work, $z$ is assumed to follow multivariate Gaussian
distribution with a diagonal covariance matrix, which is conditioned
on $x$ as $p_{\phi}\left(z\mid x\right)$ and a recognition network
$q_{\theta}(z\mid x,y)$ to approximate the true posterior distribution
$p(z\mid x,y).$ The variational lower bound becomes:

\begin{alignat}{1}
L\left(\phi,\theta;y,x\right)= & -KL\left(q_{\theta}\left(z\mid x,y\right)\parallel p_{\phi}\left(z\mid x\right)\right)+\mathbb{E}_{q_{\theta}\left(z\mid x,y\right)}\left[\log p\left(y\mid x,z\right)\right]\leq\log p\left(y\mid x\right)\label{eq:L_CVAE}
\end{alignat}
With the introduction of the neural approximator $q_{\theta}(z\mid x,y)$
and the reparameterization trick \cite{kingma2014semi}, we can apply
the standard back-propagation to compute the gradient of the variational
lower bound. Fig. \ref{fig:Graphical-Model-of}(a) depicts elements
of the graphical model for this approach in the case of using CVAE.

\section{Methods}

Built upon CVAE and partly inspired by VRNN \cite{chung2015recurrent},
we introduce a novel memory-augmented variational recurrent network
dubbed Variational Memory Encoder-Decoder (VMED). With an external
memory module, VMED explicitly models the dependencies between latent
random variables across subsequent timesteps. However, unlike the
VRNN which uses hidden values of RNN to model the latent distribution
as a Gaussian, our VMED uses read values $r$ from an external memory
$M$ as a Mixture of Gaussians (MoG) to model the latent space. This
choice of MoG also leads to new formulation for the prior $p_{\phi}$
and the posterior $q_{\theta}$ mentioned in Eq. (\ref{eq:L_CVAE}).
The graphical representation of our model is shown in Fig. \ref{fig:Graphical-Model-of}(b).

\begin{figure}
\begin{centering}
\includegraphics[width=1\linewidth]{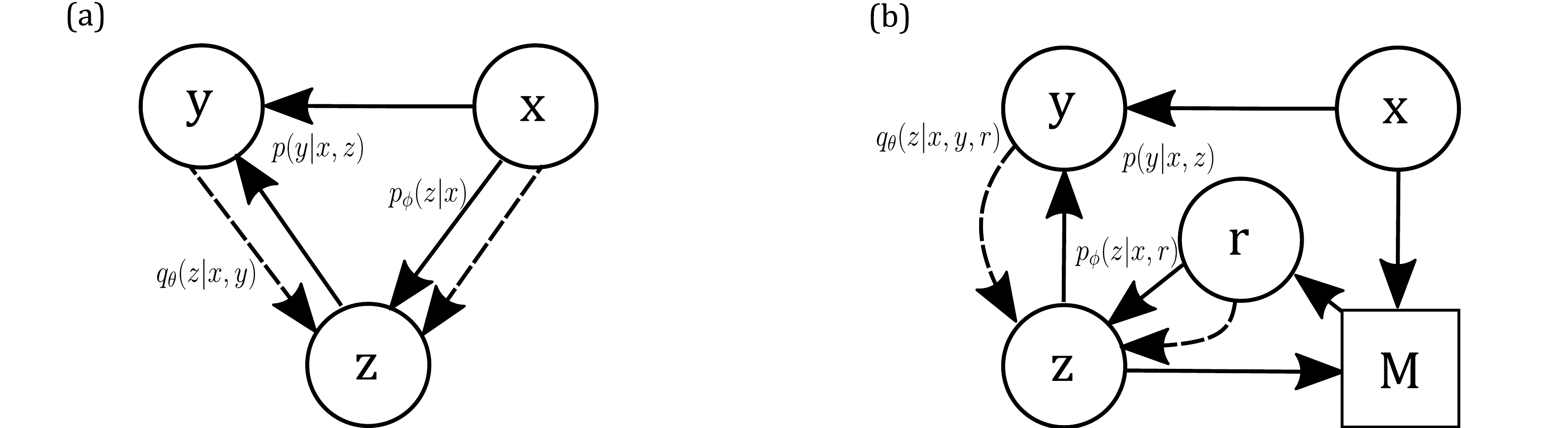}\caption{Graphical Models of the vanilla CVAE (a) and our proposed VMED (b)\label{fig:Graphical-Model-of}}
\par\end{centering}
\end{figure}

\subsection{Generative Process}

The VMED includes a CVAE at each time step of the decoder. These CVAEs
are conditioned on the context sequence via $K$ read values $r_{t-1}=\left[r_{t-1}^{1},r_{t-1}^{2},...,r_{t-1}^{K}\right]$
from the external memory. Since the read values are conditioned on
the previous state of the decoder $h_{t-1}^{d}$, our model takes
into account the temporal structure of the output. Unlike other designs
of CVAE where there is often only one CVAE with a Gaussian prior for
the whole decoding process, our model keeps reading the external memory
to produce the prior as a Mixture of Gaussians at every timestep.
At the $t$-th step of generating an utterance in the output sequence,
the decoder will read from the memory $K$ read values, representing
$K$ modes of the MoG. This multi-modal prior reflects the fact that
given a context $x$, there are different modes of uttering the output
word $y_{t}$, which a single mode cannot fully capture. The MoG prior
distribution is modeled as:

\begin{equation}
g_{t}=p_{\phi}\left(z_{t}\mid x,r_{t-1}\right)=\stackrel[i=1]{K}{\sum}\pi_{t}^{i,x}\left(x,r_{t-1}^{i}\right)\mathcal{N}\left(z_{t};\mu_{t}^{i,x}\left(x,r_{t-1}^{i}\right),\sigma_{t}^{i,x}\left(x,r_{t-1}^{i}\right)^{2}\mathbf{I}\right)\label{eq:mog}
\end{equation}
We treat the mean $\mu_{t}^{i,x}$ and standard deviation (s.d.) $\sigma_{t}^{i,x}$
of each Gaussian distribution in the prior as neural functions of
the context sequence $x$ and read vectors from the memory. The context
is encoded into the memory by an $LSTM^{E}$ encoder. In decoding,
the decoder $LSTM^{D}$ attends to the memory and choose $K$ read
vectors. We split each read vector into two parts $r^{i,\mu}$ and
$r^{i,\sigma}$ , each of which is used to compute the mean and s.d.,
respectively: $\mu_{t}^{i,x}=r_{t-1}^{i,\mu}$, $\sigma_{t}^{i,x}=softplus\left(r_{t-1}^{i,\sigma}\right)$.
Here we use the softplus function for computing s.d. to ensure the
positiveness. The mode weight $\pi_{t}^{i,x}$ is chosen based on
the read attention weights $w_{t-1}^{i,r}$ over memory slots. Since
we use soft-attention, a read value is computed from all slots yet
the main contribution comes from the one with highest attention score.
Thus, we pick the maximum attention score in each read weight and
normalize to become the mode weights: $\pi_{t}^{i,x}=\max\,w_{t-1}^{i,r}/\stackrel[i=1]{i=K}{\sum}\max\,w_{t-1}^{i,r}$. 

Armed with the prior, we follow a recurrent generative process by
alternatively using the memory to compute the MoG and using latent
variable $z$ sampled from the MoG to update the memory and produce
the output conditional distribution. The pseudo-algorithm of the generative
process is given in Algorithm \ref{alg:Generation-step-of}. 

\begin{algorithm}[t]
\begin{algorithmic}[1]
\small
\Require{Given $p_{\phi}$, $\left[r_{0}^{1},r_{0}^{2},...,r_{0}^{K}\right]$, $h_0^d$, $y_{0}^{*}$}
\For{$t=1,T$}
\State{Sampling $z_{t}\sim p_{\phi}\left(z_{t}\mid x,r_{t-1}\right)$ in Eq. (3)}
\State{Compute: $o_{t}^{d},h_{t}^{d}=LSTM^{D}\left(\left[y_{t-1}^{*},z_{t}\right],h_{t-1}^{d}\right)$}
\State{Compute the conditional distribution: $p\left(y_{t}\mid x,z_{\leq t}\right)=softmax\left(W_{out}o_{t}^{d}\right)$}
\State{Update memory and read $[r_{t}^{1},r_{t}^{2},...,r_{t}^{K}]$ using $h_t^d$ as in DNC}
\State{Generate output $y_{t}^{*}=\underset{y\in Vocab}{argmax}\,p\left(y_{t}=y\mid x,z_{\leq t}\right)$}
\EndFor
\end{algorithmic} 

\caption{VMED Generation\label{alg:Generation-step-of}}

\end{algorithm}

\subsection{Neural Posterior Approximation}

At each step of the decoder, the true posterior $p\left(z_{t}\mid x,y\right)$
will be approximated by a neural function of $x,y$ and $r_{t-1}$,
denoted as $q_{\theta}\left(z_{t}\mid x,y,r_{t-1}\right)$ . Here,
we use a Gaussian distribution to approximate the posterior. The unimodal
posterior is chosen because given a response $y$, it is reasonable
to assume only one mode of latent space is responsible for this response.
Also, choosing a unimodel will allow the reparameterization trick
during training and reduce the complexity of $KL$ divergence computation.
The approximated posterior is computed by the following the equation:

\begin{equation}
f_{t}=q_{\theta}\left(z_{t}\mid x,y_{\leq t},r_{t-1}\right)=\mathcal{N}\left(z_{t};\mu_{t}^{x,y}\left(x,y_{\leq t},r_{t-1}\right),\sigma_{t}^{x,y}\left(x,y_{\leq t},r_{t-1}\right)^{2}\textrm{\ensuremath{\mathbf{I}}}\right)\label{eq:q_theta}
\end{equation}
 with mean $\mu_{t}^{x,y}$ and s.d. $\sigma_{t}^{x,y}$. We use
an $LSTM^{U}$ utterance encoder to model the ground truth utterance
sequence up to timestep $t$-th $y_{\leq t}$. The $t$-th hidden
value of the $LSTM^{U}$ is used to represent the given data in the
posterior: $h_{t}^{u}=LSTM^{U}\left(y_{t},h_{t-1}^{u}\right)$. The
neural posterior combines the read values $\mathbf{r}_{t}=\stackrel[i=1]{K}{\sum}\pi_{t}^{i,x}r_{t-1}^{i}$
together with the ground truth data to produce the Gaussian posterior:
$\mu_{t}^{x,y}=W_{\mu}\left[\mathbf{r}_{t},h_{t}^{u}\right]$, $\sigma_{t}^{x,y}=softplus\left(W_{\sigma}\left[\mathbf{r}_{t},h_{t}^{u}\right]\right)$.
In these equations, we use learnable matrix weights $W_{\mu}$ and
$W_{\sigma}$ as a recognition network to compute the mean and s.d.
of the posterior, ensuring that the distribution has the same dimension
as the prior. We apply the reparamterization trick to calculate the
random variable sampled from the posterior as $z'_{t}=\mu_{t}^{x,y}+\sigma_{t}^{x,y}\odot\epsilon$,
$\epsilon\in\mathcal{N}\left(0,\mathbf{I}\right)$. Intuitively, the
reparameterization trick bridges the gap between the generation model
and the inference model during the training. 

\subsection{Learning}

In the training phase, the neural posterior is used to produce the
latent variable $z'_{t}$. The read values from memory are used directly
as the MoG priors and the priors are trained to approximate the posterior
by reducing the $KL$ divergence. During testing, the decoder uses
the prior for generating latent variable $z_{t}$, from which the
output is computed. The training and testing diagram is illustrated
in Fig. \ref{fig:Training-of-VM3NN}. The objective function becomes
a timestep-wise variational lower bound by following similar derivation
presented in \cite{chung2015recurrent}:

\begin{equation}
\mathcal{L}\left(\theta,\phi;y,x\right)=E_{q*}\left[\stackrel[t=1]{T}{\sum}-KL\left(q_{\theta}\left(z_{t}\mid x,y_{\leq t},r_{t-1}\right)\parallel p_{\phi}\left(z_{t}\mid x,r_{t-1}\right)\right)+\log p\left(y_{t}\mid x,z_{\leq t}\right)\right]\label{eq:exloss}
\end{equation}
where $q*=q_{\theta}\left(z_{\leq T}\mid x,y_{\leq T},r_{<T}\right)$.
To maximize the objective function, we have to compute $KL$ divergence
between $f_{t}=q_{\theta}\left(z_{t}\mid x,y_{\leq t},r_{t-1}\right)$
and $g_{t}=p_{\phi}\left(z_{t}\mid x,r_{t-1}\right)$. Since there
is no closed-form for this $KL\left(f_{t}\parallel g_{t}\right)$
between Gaussian $f_{t}$ and Mixture of Gaussians $g_{t}$, we use
a closed-form approximation named $D_{var}$ \cite{hershey2007approximating}
to replace the $KL$ term in the objective function. For our case:
$KL\left(f_{t}\parallel g_{t}\right)\approx D_{var}\left(f_{t}\parallel g_{t}\right)=-\log\stackrel[i=1]{K}{\sum}\pi^{i}e^{-KL\left(f_{t}\parallel g_{t}^{i}\right)}$.
Here, $KL\left(f_{t}\parallel g_{t}^{i}\right)$ is the $KL$ divergence
between two Gaussians and $\pi^{i}$ is the mode weight of $g_{t}$.
The final objective function is:

\begin{equation}
\begin{alignedat}{1}\mathcal{L}= & \stackrel[t=1]{T}{\sum}\log\stackrel[i=1]{K}{\sum}\left[\pi_{t}^{i,x}\exp\left(-KL\left(\mathcal{N}\left(\mu_{t}^{x,y},\sigma_{t}^{x,y}{}^{2}\textrm{\ensuremath{\mathbf{I}}}\right)\parallel\mathcal{N}\left(\mu_{t}^{i,x},\sigma_{t}^{i,x}{}^{2}\mathbf{I}\right)\right)\right)\right]\\
 & +\frac{1}{L}\stackrel[t=1]{T}{\sum}\stackrel[l=1]{L}{\sum}\log p\left(y_{t}\mid x,z_{\leq t}^{(l)}\right)
\end{alignedat}
\label{eq:loss}
\end{equation}

\begin{figure}
\begin{centering}
\includegraphics[width=0.7\linewidth]{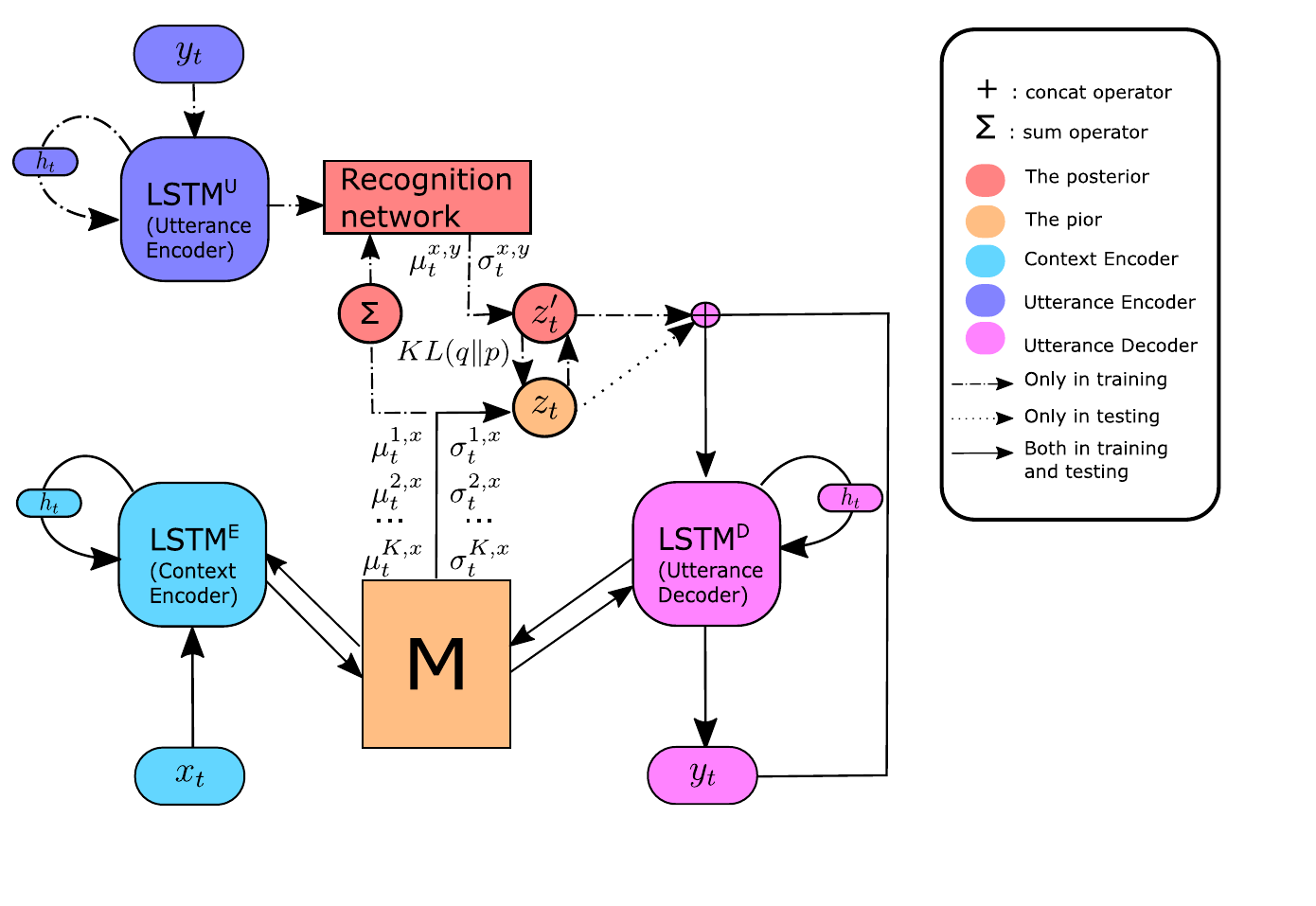}
\par\end{centering}
\caption{Training and testing of VMED\label{fig:Training-of-VM3NN}}
\end{figure}

\subsection{Theoretical Analysis }

We now show that by modeling the prior as MoG and the posterior as
Gaussian, minimizing the approximation results in $KL$ divergence
minimization. Let define the log-likelihood $L_{f}\left(g\right)=E_{f\left(x\right)}\left[\log g\left(x\right)\right]$,
we have (see Supplementary material for full derivation):

\begin{alignat*}{1}
L_{f}\left(g\right)\geq & \log\stackrel[i=1]{K}{\sum}\pi^{i}e^{-KL\left(f\parallel g^{i}\right)}+L_{f}\left(f\right)=-D_{var}+L_{f}\left(f\right)\\
\Rightarrow D_{var}\geq & L_{f}\left(f\right)-L_{f}\left(g\right)=KL\left(f\parallel g\right)
\end{alignat*}
Thus, minimizing $D_{var}$ results in $KL$ divergence minimization.
Next, we establish an upper bound on the total timestep-wise $KL$
divergence in Eq. (\ref{eq:exloss}) and show that minimizing this
upper bound is equivalent to fitting a continuous function by a scaled
MoG. The total timestep-wise $KL$ divergence reads:

\begin{alignat*}{1}
\stackrel[t=1]{T}{\sum}KL\left(f_{t}\parallel g_{t}\right)= & \stackrel[-\infty]{+\infty}{\int}\stackrel[t=1]{T}{\sum}f_{t}\left(x\right)\log\left[f_{t}\left(x\right)\right]dx\quad-\stackrel[-\infty]{+\infty}{\int}\stackrel[t=1]{T}{\sum}f_{t}\left(x\right)\log\left[g_{t}\left(x\right)\right]dx
\end{alignat*}
where $g_{t}=\stackrel[i=1]{K}{\sum}\pi_{t}^{i}g_{t}^{i}$ and $g_{t}^{i}$
is the $i$-th Gaussian in the MoG at timestep $t$-th. If at each
decoding step, minimizing $D_{var}$ results in adequate $KL$ divergence
such that the prior is optimized close to the neural posterior, according
to Chebyshev's sum inequality, we can derive an upper bound on the
total timestep-wise $KL$ divergence as (see  Supplementary Materials
 for full derivation):

\begin{equation}
\stackrel[-\infty]{+\infty}{\int}\stackrel[t=1]{T}{\sum}f_{t}\left(x\right)\log\left[f_{t}\left(x\right)\right]dx\quad-\stackrel[-\infty]{+\infty}{\int}\frac{1}{T}\stackrel[t=1]{T}{\sum}f_{t}\left(x\right)\log\left[\stackrel[t=1]{T}{\prod}g_{t}\left(x\right)\right]dx\label{eq:fupper}
\end{equation}
The left term is sum of the entropies of $f_{t}\left(x\right)$, which
does not depend on the training parameter $\phi$ used to compute
$g_{t}$, so we can ignore that. Thus given $f$, minimizing the upper
bound of the total timestep-wise $KL$ divergence is equivalent to
maximizing the right term of Eq. (\ref{eq:fupper}). Since $g_{t}$
is an MoG and products of MoG is proportional to an MoG, $\stackrel[t=1]{T}{\prod}g_{t}\left(x\right)$
is a scaled MoG (see  Supplementary material  for full proof). Maximizing
the right term is equivalent to fitting function $\stackrel[t=1]{T}{\sum}f_{t}\left(x\right)$,
which is sum of Gaussians and thus continuous, by a scaled MoG. This,
in theory, is possible regardless of the form of $f_{t}$ since MoG
is a universal approximator \cite{bacharoglou2010approximation,maz1996approximate}.

\section{Results}

\begin{table}
\begin{centering}
\caption{BLEU-1, 4 and A-Glove on testing datasets. B1, B4, AG are acronyms
for BLEU-1, BLEU-4, A-Glove metrics, respectively (higher is better).
\label{tab:BLEUs-and-A-Glove}}
~
\par\end{centering}
\centering{}{\small{}}%
\begin{tabular}{|c|c|c|c|c|c|c|c|c|c|c|c|c|}
\hline 
\multirow{2}{*}{{\small{}Model}} & \multicolumn{3}{c|}{{\small{}Cornell Movies}} & \multicolumn{3}{c|}{{\small{}OpenSubtitle}} & \multicolumn{3}{c|}{{\small{}LJ users}} & \multicolumn{3}{c|}{{\small{}Reddit comments}}\tabularnewline
\cline{2-13} 
 & {\small{}B1} & {\small{}B4} & {\small{}AG} & {\small{}B1} & {\small{}B4} & {\small{}AG} & {\small{}B1} & {\small{}B4} & {\small{}AG} & {\small{}B1} & {\small{}B4} & {\small{}AG}\tabularnewline
\hline 
{\small{}Seq2Seq} & {\small{}18.4} & {\small{}9.5} & {\small{}0.52} & {\small{}11.4} & {\small{}5.4} & {\small{}0.29} & {\small{}13.1} & {\small{}6.4} & {\small{}0.45} & {\small{}7.5} & {\small{}3.3} & {\small{}0.31}\tabularnewline
\hline 
{\small{}Seq2Seq-att} & {\small{}17.7} & {\small{}9.2} & {\small{}0.54} & {\small{}13.2} & {\small{}6.5} & {\small{}0.42} & {\small{}11.4} & {\small{}5.6} & {\small{}0.49} & {\small{}5.5} & {\small{}2.4} & {\small{}0.25}\tabularnewline
\hline 
{\small{}DNC} & {\small{}17.6} & {\small{}9.0} & {\small{}0.51} & {\small{}14.3} & {\small{}7.2} & {\small{}0.47} & {\small{}12.4} & {\small{}6.1} & {\small{}0.47} & {\small{}7.5} & {\small{}3.4} & {\small{}0.28}\tabularnewline
\hline 
{\small{}CVAE} & {\small{}16.5} & {\small{}8.5} & {\small{}0.56} & {\small{}13.5} & {\small{}6.6} & {\small{}0.45} & {\small{}12.2} & {\small{}6.0} & {\small{}0.48} & {\small{}5.3} & {\small{}2.8} & {\small{}0.39}\tabularnewline
\hline 
{\small{}VLSTM} & {\small{}18.6} & {\small{}9.7} & {\small{}0.59} & {\small{}16.4} & {\small{}8.1} & {\small{}0.43} & {\small{}11.5} & {\small{}5.6} & {\small{}0.46} & {\small{}6.9} & {\small{}3.1} & {\small{}0.27}\tabularnewline
\hline 
\hline 
{\small{}VMED (K=1)} & {\small{}20.7} & {\small{}10.8} & {\small{}0.57} & {\small{}12.9} & {\small{}6.2} & {\small{}0.44} & {\small{}13.7} & {\small{}6.9} & {\small{}0.47} & {\small{}9.1} & {\small{}4.3} & {\small{}0.39}\tabularnewline
\hline 
{\small{}VMED (K=2)} & {\small{}22.3} & {\small{}11.9} & \textbf{\small{}0.64} & {\small{}15.3} & {\small{}8.8} & {\small{}0.49} & {\small{}15.4} & {\small{}7.9} & \textbf{\small{}0.51} & {\small{}9.2} & {\small{}4.4} & {\small{}0.38}\tabularnewline
\hline 
{\small{}VMED (K=3)} & {\small{}19.4} & {\small{}10.4} & {\small{}0.63} & \textbf{\small{}24.8} & \textbf{\small{}12.9} & \textbf{\small{}0.54} & \textbf{\small{}18.1} & \textbf{\small{}9.8} & {\small{}0.49} & \textbf{\small{}12.3} & \textbf{\small{}6.4} & \textbf{\small{}0.46}\tabularnewline
\hline 
{\small{}VMED (K=4)} & \textbf{\small{}23.1} & \textbf{\small{}12.3} & {\small{}0.61} & {\small{}17.9} & {\small{}9.3} & {\small{}0.52} & {\small{}14.4} & {\small{}7.5} & {\small{}0.47} & {\small{}8.6} & {\small{}4.6} & {\small{}0.41}\tabularnewline
\hline 
\end{tabular}{\small\par}
\end{table}

\textbf{Datasets and pre-processing}: We perform experiments on two
collections: The first collection includes open-domain movie transcript
datasets containing casual conversations: Cornell Movies\footnote{\url{http://www.cs.cornell.edu/~cristian/Cornell_Movie-Dialogs_Corpus.html}}
and OpenSubtitle\footnote{\url{http://opus.nlpl.eu/OpenSubtitles.php}}.
They have been used commonly in evaluating conversational agents \cite{lison2017not,vinyals2015neural}.
The second are closed-domain datasets crawled from specific domains,
which are question-answering of LiveJournal (LJ) users and Reddit
comments on movie topics. For each dataset, we use 10,000 conversations
for validating and 10,000 for testing. 

\textbf{Baselines, implementations and metrics}: We compare our model
with three deterministic baselines: the encoder-decoder neural conversational
model (Seq2Seq) similar to \cite{vinyals2015neural} and its two variants
equipped with attention mechanism \cite{bahdanau2015neural} (Seq2Seq-att)
and a DNC external memory \cite{graves2016hybrid} (DNC). The vanilla
CVAE is also included in the baselines. To build this CVAE, we follow
similar architecture introduced in \cite{zhao2017learning} without
bag-of-word loss and dialog act features\footnote{Another variant of non-memory CVAE with MoG prior is also examined.
We produce a set of MoG parameters by a feed forward network with
the input as the last encoder hidden states. However, the model is
hard to train and fails to converge with these datasets.}. A variational recurrent model without memory is also included in
the baselines. The model termed VLSTM is implemented based on LSTM
instead of RNN as in VRNN framework \cite{chung2015recurrent}. We
try our model VMED\footnote{Source code is available at \url{https://github.com/thaihungle/VMED}}
with different number of modes ($K=1,2,3,4$). It should be noted
that, when $K=1$, our model's prior is exactly a Gaussian and the
$KL$ term in Eq. (\ref{eq:loss}) is no more an approximation. Details
of dataset descriptions and model implementations are included in
 Supplementary material.

We report results using two performance metrics in order to evaluate
the system from various linguistic points of view: (i) Smoothed Sentence-level
BLEU \cite{chen2014systematic}: BLEU is a popular metric that measures
the geometric mean of modified ngram precision with a length penalty.
We use BLEU-1 to 4 as our lexical similarity. (ii) Cosine Similarly
of Sentence Embedding: a simple method to obtain sentence embedding
is to take the average of all the word embeddings in the sentences
\cite{forgues2014bootstrapping}. We follow \cite{zhao2017learning}
and choose Glove \cite{levy2014neural} as the word embedding in measuring
sentence similarly (A-Glove). To measure stochastic models, for each
input, we generate output ten times. The metric between the ground
truth and the generated output is calculated and taken average over
ten responses. 

\begin{table}
\begin{centering}
\caption{Examples of context-response pairs. /{*}/ denotes separations between
stochastic responses.\label{tab:Examples-of-context-response}}
~
\par\end{centering}
\centering{}%
\begin{tabular}{>{\raggedright}p{0.2\columnwidth}|>{\raggedright}p{0.73\columnwidth}}
\hline 
{\footnotesize{}Input context} & {\footnotesize{}Response}\tabularnewline
\hline 
\multirow{6}{0.2\columnwidth}{\textbf{\footnotesize{}Reddit comment:}{\footnotesize{} What is your
favorite scene in film history ? Mine is the restaurant scene in the
Godfather. }} & \multirow{1}{0.73\columnwidth}{\textbf{\footnotesize{}Seq2Seq:}{\footnotesize{} The scene in}}\tabularnewline
 & \textbf{\footnotesize{}Seq2Seq-att: }{\footnotesize{}The final}\tabularnewline
 & \textbf{\footnotesize{}DNC: }{\footnotesize{}The scene in}\tabularnewline
 & \textbf{\footnotesize{}CVAE:}{\footnotesize{} Inception god! Not by
a shark /{*}/ Amour great /{*}/ Pro thing you know 3 dead}\tabularnewline
 & \textbf{\footnotesize{}VLSTM: }{\footnotesize{}The scene in /{*}/
The of a dead /{*}/ The sky in scene}\tabularnewline
 & \textbf{\footnotesize{}VMED (K=3): }{\footnotesize{}The opening scene
from history movie /{*}/ The scene in a shot nights! Robin movie /{*}/
The psycho scene in fight from }\tabularnewline
\hline 
\multirow{6}{0.2\columnwidth}{\textbf{\footnotesize{}Reddit comment:}{\footnotesize{} What actor
will win an Oscar in the next 10 years ?}} & \textbf{\footnotesize{}Seq2Seq: }{\footnotesize{}Colin}\tabularnewline
 & \textbf{\footnotesize{}Seq2Seq-att: }{\footnotesize{}Liam Neeson }\tabularnewline
 & \textbf{\footnotesize{}DNC:}{\footnotesize{} Tom Gyllenhaal}\tabularnewline
 & \textbf{\footnotesize{}CVAE:}{\footnotesize{} Daryl and Aaron /{*}/
Carefully count Alfred Deniro /{*}/ Ponyo Joker possible}\tabularnewline
 & \textbf{\footnotesize{}VLSTM:}{\footnotesize{} Michael Bullock /{*}/
Michael /{*}/ Michael De}\tabularnewline
 & \textbf{\footnotesize{}VMED (K=3)}{\footnotesize{}: Edward or Leo
Dicaprio goes on /{*}/ Dicaprio will /{*}/ Dicaprio Tom has actually
in jack on road }\tabularnewline
\hline 
\multirow{6}{0.2\columnwidth}{\textbf{\footnotesize{}LJ user:}{\footnotesize{} Your very pretty
and your is very cute. May i add ya to my friends list ? }} & \textbf{\footnotesize{}Seq2Seq:}{\footnotesize{} I! Add and too you
back}\tabularnewline
 & \textbf{\footnotesize{}Seq2Seq-att: }{\footnotesize{}I did may! Away
back }\tabularnewline
 & \textbf{\footnotesize{}DNC: }{\footnotesize{}Sure you added}\tabularnewline
 & \textbf{\footnotesize{}CVAE:}{\footnotesize{} Sure ! I can add you
/{*}/ Thank cool of surely /{*}/ Thank hi of sure!}\tabularnewline
 & \textbf{\footnotesize{}VLSTM:}{\footnotesize{} Haha of do go /{*}/
Haha of sure! Yes lol /{*}/ Haha sure! Go}\tabularnewline
 & \textbf{\footnotesize{}VMED (K=3):}{\footnotesize{} Thank cool for
sure /{*}/ Add definitely! Sure surely friends /{*}/ Thank girl go
ahead}\tabularnewline
\hline 
\multirow{6}{0.2\columnwidth}{\textbf{\footnotesize{}Cornell movies: }{\footnotesize{}Nobody can
murder someone with their feelings . I mean that is crazy ?}} & \textbf{\footnotesize{}Seq2Seq:}{\footnotesize{} Yes you are}\tabularnewline
 & \textbf{\footnotesize{}Seq2Seq-att: }{\footnotesize{}Really not is
it}\tabularnewline
 & \textbf{\footnotesize{}DNC: }{\footnotesize{}Managing the truth}\tabularnewline
 & \textbf{\footnotesize{}CVAE: }{\footnotesize{}Sure not to be in here
/{*}/ Oh yes but i know! /{*}/ That to doing with here and}\tabularnewline
 & \textbf{\footnotesize{}VLSTM:}{\footnotesize{} I am dead . ! That
is ... /{*}/ I did not what way . /{*}/ I am not . But his things
with ...}\tabularnewline
 & \textbf{\footnotesize{}VMED (K=4):}{\footnotesize{} You are right!
/{*}/ That is crazy /{*}/ You can't know Jimmy }\tabularnewline
\hline 
\multirow{6}{0.2\columnwidth}{\textbf{\footnotesize{}OpenSubtitle:}{\footnotesize{} I'm obliged
for your hospitality. I appreciate it and your husband too.}} & \textbf{\footnotesize{}Seq2Seq: }{\footnotesize{}That is have got
coming about these }\tabularnewline
 & \textbf{\footnotesize{}Seq2Seq-att:}{\footnotesize{} May you not what
nothing nobody}\tabularnewline
 & \textbf{\footnotesize{}DNC: }{\footnotesize{}Yes i am taking this}\tabularnewline
 & \textbf{\footnotesize{}CVAE:}{\footnotesize{} No . that for good!
And okay /{*}/ All in the of two thing /{*}/ Sure. Is this! }\tabularnewline
 & \textbf{\footnotesize{}VLSTM:}{\footnotesize{} I ... /{*}/ I understand
/{*}/ I ! . ...}\tabularnewline
 & \textbf{\footnotesize{}VMED (K=3):}{\footnotesize{} I know. I can
afford /{*}/ I know nothing to store for you pass /{*}/ I know. Doing
anymore you father}\tabularnewline
\hline 
\end{tabular}
\end{table}

\textbf{Metric-based Analysi}s: We report results on four test datasets
in Table \ref{tab:BLEUs-and-A-Glove}. For BLEU scores, here we only
list results for BLEU-1 and 4. Other BLEUs show similar pattern and
will be listed in  Supplementary material. As clearly seen, VMED models
outperform other baselines over all metrics across four datasets.
In general, the performance of Seq2Seq is comparable with other deterministic
methods despite its simplicity. Surprisingly, CVAE or VLSTM does not
show much advantage over deterministic models. As we shall see, although
CVAE and VLSTM responses are diverse, they are often out of context.
Among different modes of VMED, there is often one best fit with the
data and thus shows superior performance. The optimal number of modes
in our experiments often falls to $K=3$, indicating that increasing
modes does not mean to improve accuracy. 

It should be noted that there is inconsistency between BLEU scores
and A-Glove metrics. This is because BLEU measures lexicon matching
while A-Glove evaluates semantic similarly in the embedding space.
For examples, two sentences having different words may share the same
meaning and lie close in the embedding space. In either case, compared
to others, our optimal VMED always achieves better performance.

\textbf{Qualitative Analysis}

Table \ref{tab:Examples-of-context-response} represents responses
generated by experimental models in reply to different input sentences.
The replies listed are chosen randomly from 50 generated responses
whose average of metric scores over all models are highest. For stochastic
models, we generate three times for each input, resulting in three
different responses. In general, the stochastic models often yield
longer and diverse sequences as expected. For closed-domain cases,
all models responses are fairly acceptable. Compared to the rest,
our VMED's responds seem to relate more to the context and contain
meaningful information. In this experiment, the open-domain input
seems nosier and harder than the closed-domain ones, thus create a
big challenge for all models. Despite that, the quality of VMED's
responses is superior to others. Among deterministic models, DNC's
generated responses look more reasonable than Seq2Seq's even though
its BLEU scores are not always higher. Perhaps, the reference to external
memory at every timestep enhances the coherence between output and
input, making the response more related to the context. VMED may inherit
this feature from its external memory and thus tends to produce reasonable
responses. By contrast, although responses from CVAE and VLSTM are
not trivial, they have more grammatical errors and sometimes unrelated
to the topic.

\section{Related Work}

With the recent revival of recurrent neural networks (RNNs), there
has been much effort spent on learning generative models of sequences.
Early attempts include training RNN to generate the next output given
previous sequence, demonstrating RNNs' ability to generate text and
handwriting images \cite{graves2013generating}. Later, encoder-decoder
architecture \cite{sutskever2014sequence} enables generating a whole
sequence in machine translation \cite{kalchbrenner2013recurrent},
text summation \cite{nallapati2016abstractive} and conversation generation
\cite{vinyals2015neural}. Although these models have achieved significant
empirical successes, they fall short to capture the complexity and
variability of sequential processes.

These limitations have recently triggered a considerable effort on
introducing variability into the encoder-decoder architecture. Most
of the methods focus on conditional VAE (CVAE) by constructing a variational
lower bound conditioned on the context. The setting can be found in
many applications including machine translation \cite{zhang2016variational}
and dialog generation \cite{bowman2016generating,serban2017hierarchical,shen2017conditional,zhao2017learning}.
A common trick is to place a neural net between the encoder and the
decoder to compute the Gaussian prior and posterior of the CVAE. This
design is further enhanced by the use of external memory \cite{chen2018hierarchical}
and reinforcement learning \cite{wen2017latent}. In contrast to this
design, our VMED uses recurrent latent variable approach \cite{chung2015recurrent},
that is, our model requires a CVAE for each step of generation. Besides,
our external memory is used for producing the latent distribution,
which is different from the one proposed in \cite{chen2018hierarchical}
where the memory is used only for holding long-term dependencies at
sentence level. Compared to variational addressing scheme mentioned
in \cite{bornschein2017variational}, our memory uses deterministic
addressing scheme, yet the memory content itself is used to introduce
randomness to the architecture. More relevant to our work is GTMM
\cite{gemici2017generative} where memory read-outs involve in constructing
the prior and posterior at every timesteps. However, this approach
uses Gaussian prior without conditional context.  

Using mixture of models instead of single Gaussian in VAE framework
is not a new concept. Works in \cite{dilokthanakul2016deep,jiangvariational}
and \cite{nalisnick2016approximate} proposed replacing the Gaussian
prior and posterior in VAE by MoGs for clustering and generating image
problems. Works in \cite{shu2016stochastic} and \cite{NIPS2017_7158}
applied MoG prior to model transitions between video frames and caption
generation, respectively. These methods use simple feed forward network
to produce Gaussian sub-distributions independently. In our model,
on the contrary, memory slots are strongly correlated with each others,
and thus modes in our MoG work together to define the shape of the
latent distributions at specific timestep. To the best of our knowledge,
our work is the first attempt to use an external memory to induce
mixture models  for sequence generation problems.

\section{Conclusions}

We propose a novel approach to sequence generation called Variational
Memory Encoder-Decoder (VMED) that introduces variability into encoder-decoder
architecture via the use of external memory as mixture model. By modeling
the latent temporal dependencies across timesteps, our VMED produces
a MoG representing the latent distribution. Each mode of the MoG associates
with some memory slot and thus captures some aspect of context supporting
generation process. To accommodate the MoG, we employ a $KL$ approximation
and we demonstrate that minimizing this approximation is equivalent
to minimizing the $KL$ divergence. We derive an upper bound on our
total timestep-wise $KL$ divergence and indicate that the optimization
of this upper bound is equivalent to fitting a continuous function
by an scaled MoG, which is in theory possible regardless of the function
form. This forms a theoretical basis for our model formulation using
MoG prior for every step of generation. We apply our proposed model
to conversation generation problem. The results demonstrate that VMED
outperforms recent advances both quantitatively and qualitatively.
Future explorations may involve implementing a dynamic number of modes
that enable learning of the optimal $K$ for each timestep. Another
aspect would be multi-person dialog setting, where our memory as mixture
model may be useful to capture more complex modes of speaking in the
dialog.

\bibliographystyle{plain}
\bibliography{}
\newpage{}

\renewcommand\thesubsection{\Alph{subsection}}

\section*{Supplementary material}

\subsection{Derivation of the Upper Bound on the $KL$ divergence}
\begin{thm}
The KL divergence between a Gaussian and a Mixture of Gaussians has
an upper bound $D_{var}$.
\end{thm}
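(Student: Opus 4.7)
The plan is to establish the lower bound $L_f(g) \ge L_f(f) + \log\sum_{i=1}^{K}\pi^{i}e^{-KL(f\parallel g^{i})}$ that the main text already uses; rearranging this inequality immediately gives $KL(f\parallel g)\le D_{var}(f\parallel g)$, so all the work lies in bounding $L_f(g)$ from below. The natural route is to apply Jensen's inequality, but applying it directly in the form $\log\sum_i\pi^i g^i(x)\ge\sum_i\pi^i\log g^i(x)$ gives the cruder bound $-\sum_i\pi^i KL(f\parallel g^i)$, which is strictly weaker than $D_{var}$. The trick, following Hershey and Olsen, is to introduce auxiliary variational weights and then optimize them out.

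Concretely, first I would introduce a free probability vector $\phi=(\phi^{1},\dots,\phi^{K})$ with $\phi^{i}\ge 0$, $\sum_i\phi^{i}=1$, and write
\begin{equation*}
\log\sum_{i=1}^{K}\pi^{i}g^{i}(x)=\log\sum_{i=1}^{K}\phi^{i}\frac{\pi^{i}g^{i}(x)}{\phi^{i}}\ \ge\ \sum_{i=1}^{K}\phi^{i}\log\frac{\pi^{i}g^{i}(x)}{\phi^{i}},
\end{equation*}
by concavity of $\log$. Taking expectation under $f$ yields
\begin{equation*}
L_f(g)\ \ge\ \sum_{i=1}^{K}\phi^{i}\bigl[\log\pi^{i}-\log\phi^{i}+L_f(g^{i})\bigr].
\end{equation*}

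Next I would tighten this bound by choosing $\phi$ optimally. A standard Lagrange-multiplier calculation with the constraint $\sum_i\phi^{i}=1$ gives the maximizer $\phi^{i\,\star}=\pi^{i}e^{L_f(g^{i})}/\sum_{j}\pi^{j}e^{L_f(g^{j})}$, and plugging this back collapses the right-hand side to $\log\sum_i\pi^{i}e^{L_f(g^{i})}$. Using the identity $L_f(g^{i})=E_f[\log g^{i}]=L_f(f)-KL(f\parallel g^{i})$ and pulling the constant $L_f(f)$ out of the sum produces the desired bound
\begin{equation*}
L_f(g)\ \ge\ L_f(f)+\log\sum_{i=1}^{K}\pi^{i}e^{-KL(f\parallel g^{i})}.
\end{equation*}
Finally, subtracting $L_f(f)$ from both sides and negating gives $KL(f\parallel g)=L_f(f)-L_f(g)\le -\log\sum_i\pi^{i}e^{-KL(f\parallel g^{i})}=D_{var}(f\parallel g)$, which is exactly the claim.

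The only subtle step is the optimization of $\phi$: one must verify it is a maximum (not a saddle) and that the boundary cases $\phi^{i}=0$ do not give a larger value, but this is routine since the objective is strictly concave in $\phi$ on the simplex interior and tends to $-\infty$ on the boundary whenever some $\pi^{i}e^{L_f(g^{i})}>0$. Everything else is algebraic manipulation using the closed form $KL(f\parallel g^{i})$ between two Gaussians, which is standard and need not be expanded here.
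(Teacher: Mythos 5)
Your proposal is correct and takes essentially the same route as the paper: introduce variational weights (your $\phi^{i}$, the paper's $\beta^{i}$), apply Jensen's inequality to get $L_{f}\left(g\right)\geq\sum_{i}\phi^{i}\left[\log\pi^{i}-\log\phi^{i}+L_{f}\left(g^{i}\right)\right]$, and optimize the weights to obtain $L_{f}\left(g\right)\geq L_{f}\left(f\right)-D_{var}$, whence $KL\left(f\parallel g\right)\leq D_{var}$. The only difference is that you carry out the Lagrange-multiplier optimization of the weights explicitly (correctly), whereas the paper delegates that step to a citation of Durrieu et al.
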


\begin{proof}
$D_{var}$$\left(f\parallel g\right)$ \cite{hershey2007approximating}
is an approximation of $KL$ divergence between two Mixture of Gaussians
(MoG), which is defined as the following:

\begin{alignat}{1}
D_{var}\left(f\parallel g\right)= & \underset{j}{\sum}\pi_{j}^{f}\log\frac{\underset{j'}{\sum}\pi_{j'}^{f}e^{-KL\left(f_{j}\parallel f_{j'}\right)}}{\underset{i}{\sum}\pi_{i}^{g}e^{-KL\left(f_{j}\parallel g_{i}\right)}}\label{eq:ori_dvar}
\end{alignat}
In our case, $f$ is a Gaussian, a special case of MoG where the number
of mode equals one. Then, Eq. (\ref{eq:ori_dvar}) becomes:

\[
D_{var}\left(f\parallel g\right)=\log\frac{1}{\stackrel[i=1]{K}{\sum}\pi_{i}^{g}e^{-KL\left(f\parallel g^{i}\right)}}=-\log\stackrel[i=1]{K}{\sum}\pi^{i}e^{-KL\left(f\parallel g^{i}\right)}
\]
Let define the log-likelihood $L_{f}\left(g\right)=E_{f\left(x\right)}\left[\log g\left(x\right)\right]$,
the lower bound for $L_{f}\left(g\right)$ can be also be derived,
using variational parameters as follows:

\begin{alignat*}{1}
L_{f}\left(g\right)= & E_{f}\left[\log\left(\stackrel[i=1]{K}{\sum}\pi^{i}g^{i}\left(x\right)\right)\right]\\
= & \stackrel[-\infty]{+\infty}{\int}f\left(x\right)\log\left(\stackrel[i=1]{K}{\sum}\beta^{i}\pi^{i}\frac{g^{i}\left(x\right)}{\beta^{i}}\right)dx\\
\geq & \stackrel[i=1]{K}{\sum}\beta^{i}\stackrel[-\infty]{+\infty}{\int}f\left(x\right)\log\left(\pi^{i}\frac{g^{i}\left(x\right)}{\beta^{i}}\right)dx
\end{alignat*}
where $\beta^{i}\geq0$ and $\stackrel[i=1]{K}{\sum}\beta^{i}=1$.
According to \cite{durrieu2012lower}, maximizing the RHS of the above
inequality with respect to $\beta^{i}$ provides a lower bound for
$L_{f}\left(g\right)$:

\begin{alignat*}{1}
L_{f}\left(g\right)\geq & \log\stackrel[i=1]{K}{\sum}\pi^{i}e^{-KL\left(f\parallel g^{i}\right)}+L_{f}\left(f\right)\\
= & -D_{var}+L_{f}\left(f\right)\\
\Rightarrow D_{var}\geq & L_{f}\left(f\right)-L_{f}\left(g\right)\\
= & KL\left(f\parallel g\right)
\end{alignat*}
Therefore, the $KL$ divergence has an upper bound: $D_{var}$.
\end{proof}

\subsection{Derivation of the Upper Bound on the Total Timestep-wise $KL$ Divergence}
\begin{lem}
Chebyshev's sum inequality: \label{lem:Chebyshev's-sum-inequality:}\\
if 

\[
a_{1}\geq a_{2}\geq...\geq a_{n}
\]
and
\end{lem}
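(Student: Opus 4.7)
The plan is to prove Chebyshev's sum inequality in its standard form, which (completing the truncated statement) I take to be the assertion that if $a_1 \geq a_2 \geq \cdots \geq a_n$ and $b_1 \geq b_2 \geq \cdots \geq b_n$, then
\begin{equation*}
n \sum_{i=1}^{n} a_i b_i \;\geq\; \Bigl(\sum_{i=1}^{n} a_i\Bigr)\Bigl(\sum_{i=1}^{n} b_i\Bigr),
\end{equation*}
i.e., the average of the products dominates the product of the averages. My approach will be the classical two-index symmetrization argument, which makes the common-ordering hypothesis directly visible through a single sign check and avoids any analytic machinery.

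The key step is to introduce the auxiliary double sum
\begin{equation*}
S \;:=\; \sum_{i=1}^{n}\sum_{j=1}^{n}(a_i - a_j)(b_i - b_j).
\end{equation*}
Expanding the four cross terms and using $\sum_{i,j} a_i b_i = n\sum_i a_i b_i = \sum_{i,j} a_j b_j$ together with the factorization $\sum_{i,j} a_i b_j = \sum_{i,j} a_j b_i = \bigl(\sum_i a_i\bigr)\bigl(\sum_j b_j\bigr)$, I would collapse the expression to
\begin{equation*}
S \;=\; 2\,\Bigl[\,n\sum_{i=1}^{n} a_i b_i \;-\; \Bigl(\sum_{i=1}^{n} a_i\Bigr)\Bigl(\sum_{i=1}^{n} b_i\Bigr)\,\Bigr],
\end{equation*}
so that the target inequality is equivalent to the single non-negativity claim $S \geq 0$.

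The remaining step is a termwise sign check: for every pair $(i,j)$, the factors $a_i - a_j$ and $b_i - b_j$ share a sign because both sequences are monotone in the same direction, so each summand $(a_i - a_j)(b_i - b_j)$ is non-negative, whence $S \geq 0$. I do not anticipate any genuine obstacle; the entire argument is elementary and self-contained. The only point requiring care is to state the hypothesis precisely (both sequences sorted in the same direction) before invoking the lemma in the derivation of Eq.~(7), since oppositely ordered sequences yield the reverse inequality and would flip the direction of the bound on the total timestep-wise $KL$ divergence; this is also why the paper invokes the premise that the prior has been optimized close to the posterior, so that the orderings of $f_t(x)$ and $\log g_t(x)$ across $t$ agree at each fixed $x$.
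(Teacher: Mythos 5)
Your proof is correct and follows essentially the same route as the paper: both introduce the symmetrized double sum $S=\sum_{i,j}(a_i-a_j)(b_i-b_j)$, observe that each summand is non-negative because the two sequences are sorted in the same direction, and expand $S$ to recover $n\sum_i a_i b_i \geq \bigl(\sum_i a_i\bigr)\bigl(\sum_i b_i\bigr)$. Your closing remark about needing the common ordering of $f_t(x)$ and $\log g_t(x)$ before invoking the lemma matches the assumption the paper states explicitly when applying it.
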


\[
b_{1}\geq b_{2}\geq...\geq b_{n}
\]
then\\
\[
\frac{1}{n}\stackrel[k=1]{n}{\sum}a_{k}b_{k}\geq\left(\frac{1}{n}\stackrel[k=1]{n}{\sum}a_{k}\right)\left(\frac{1}{n}\stackrel[k=1]{n}{\sum}b_{k}\right)
\]

\begin{proof}
Consider the sum:

\[
S=\stackrel[j=1]{n}{\sum}\stackrel[k=1]{n}{\sum}\left(a_{j}-a_{k}\right)\left(b_{j}-b_{k}\right)
\]
The two sequences are non-increasing, therefore $a_{j}-a_{k}$ and
$b_{j}-b_{k}$ have the same sign for any $j,k$. Hence $S\ge0$.
Opening the brackets, we deduce:

\[
{\displaystyle 0\leq2n\sum_{j=1}^{n}a_{j}b_{j}-2\sum_{j=1}^{n}a_{j}\,\sum_{k=1}^{n}b_{k}}
\]
whence:

\[
{\displaystyle {\frac{1}{n}}\sum_{j=1}^{n}a_{j}b_{j}\geq\left({\frac{1}{n}}\sum_{j=1}^{n}a_{j}\right)\,\left({\frac{1}{n}}\sum_{k=1}^{n}b_{k}\right)}
\]
\end{proof}
In our problem, $a_{i}=f_{i}\left(x\right)$ and $b_{i}=\log\left[g_{i}\left(x\right)\right]$,
$i=\overline{1,T}$. Under the assumption that at each step, thanks
to minimizing $D_{var}$, the approximation between the MoG and the
Gaussian is adequate to preserve the order of these values, that is,
if $f_{i}\left(x\right)\leq f_{j}\left(x\right)$, then $g_{i}\left(x\right)\leq g_{j}\left(x\right)$
and $\log\left[g_{i}\left(x\right)\right]\leq\log\left[g_{j}\left(x\right)\right]$.
Without loss of generality, we hypothesize that $f_{1}\left(x\right)\leq f_{2}\left(x\right)\leq...\leq f_{T}\left(x\right)$,
then we have $\log\left[g_{1}\left(x\right)\right]\leq\log\left[g_{2}\left(x\right)\right]\leq...\leq\log\left[g_{T}\left(x\right)\right]$.
Thus, applying Lemma \ref{lem:Chebyshev's-sum-inequality:}, we have:

\begin{alignat*}{1}
\frac{1}{T}\stackrel[t=1]{T}{\sum}f_{t}\left(x\right)\log\left[g_{t}\left(x\right)\right]dx\geq & \frac{1}{T}\stackrel[t=1]{T}{\sum}f_{t}\left(x\right)\frac{1}{T}\stackrel[t=1]{T}{\sum}\log\left[g_{t}\left(x\right)\right]dx\\
\Rightarrow\stackrel[-\infty]{+\infty}{\int}\stackrel[t=1]{T}{\sum}f_{t}\left(x\right)\log\left[g_{t}\left(x\right)\right]dx\geq & \stackrel[-\infty]{+\infty}{\int}\frac{1}{T}\stackrel[t=1]{T}{\sum}f_{t}\left(x\right)\stackrel[t=1]{T}{\sum}\log\left[g_{t}\left(x\right)\right]dx\\
\Rightarrow\stackrel[-\infty]{+\infty}{\int}\stackrel[t=1]{T}{\sum}f_{t}\left(x\right)\log\left[g_{t}\left(x\right)\right]dx\geq & \stackrel[-\infty]{+\infty}{\int}\frac{1}{T}\stackrel[t=1]{T}{\sum}f_{t}\left(x\right)\log\left[\stackrel[t=1]{T}{\prod}g_{t}\left(x\right)\right]dx
\end{alignat*}
Thus, the upper bound on the total timestep-wise $KL$ divergence
reads:

\[
\stackrel[-\infty]{+\infty}{\int}\stackrel[t=1]{T}{\sum}f_{t}\left(x\right)\log\left[f_{t}\left(x\right)\right]dx\quad-\stackrel[-\infty]{+\infty}{\int}\frac{1}{T}\stackrel[t=1]{T}{\sum}f_{t}\left(x\right)\log\left[\stackrel[t=1]{T}{\prod}g_{t}\left(x\right)\right]dx
\]

\subsection{Proof $\stackrel[t=1]{T}{\prod}g_{t}\left(x\right)=\stackrel[t=1]{T}{\prod}\stackrel[i=1]{K}{\sum}\pi_{t}^{i}g_{t}^{i}\left(x\right)$
is a Scaled MoG}
\begin{lem}
Product of two Gaussians is a scaled Gaussian. \label{lem:Product-of-two-1}
\end{lem}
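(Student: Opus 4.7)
My plan is a direct computation by completing the square. Let $f_1(x) = \mathcal{N}(x;\mu_1,\sigma_1^2)$ and $f_2(x) = \mathcal{N}(x;\mu_2,\sigma_2^2)$. The product $f_1(x) f_2(x)$ is of the form $C \exp(Q(x))$, where the normalization constant $C = (2\pi\sigma_1\sigma_2)^{-1}$ is independent of $x$ and $Q(x)$ is the sum of the two quadratics $-(x-\mu_1)^2/(2\sigma_1^2) - (x-\mu_2)^2/(2\sigma_2^2)$. Since the sum of two quadratics in $x$ is again a quadratic in $x$, one expects to rewrite $Q(x)$ in the canonical form $-(x-\mu)^2/(2\sigma^2) + R$, where $R$ is an $x$-independent residual; this is exactly the signature of a scaled Gaussian.

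The key steps, in order, are: (i) expand both quadratics and collect powers of $x$, identifying the coefficient of $x^2$, which gives $1/\sigma^2 = 1/\sigma_1^2 + 1/\sigma_2^2$, so $\sigma^2 = \sigma_1^2\sigma_2^2/(\sigma_1^2+\sigma_2^2)$; (ii) read off the linear coefficient to obtain the new mean $\mu = \sigma^2(\mu_1/\sigma_1^2 + \mu_2/\sigma_2^2)$; (iii) compute the leftover constant $R$ by subtracting $-(x-\mu)^2/(2\sigma^2)$ from $Q(x)$. Step (iii) is where a little care is needed: after simplification, $R$ reduces to $-(\mu_1-\mu_2)^2/(2(\sigma_1^2+\sigma_2^2))$. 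The product then becomes
\begin{equation*}
f_1(x)f_2(x) = S \cdot \mathcal{N}(x;\mu,\sigma^2), \quad S = \frac{1}{\sqrt{2\pi(\sigma_1^2+\sigma_2^2)}} \exp\!\left(-\frac{(\mu_1-\mu_2)^2}{2(\sigma_1^2+\sigma_2^2)}\right),
\end{equation*}
so the scale factor $S$ is itself a Gaussian evaluated at $\mu_1 - \mu_2$ with variance $\sigma_1^2 + \sigma_2^2$. This establishes the claim that the product is a scaled Gaussian.

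The main obstacle is purely bookkeeping in step (iii): one must verify that collecting all $x$-independent terms from the two original quadratics and from the completed-square expansion yields a clean expression in $(\mu_1-\mu_2)^2$ rather than a messy polynomial in $\mu_1,\mu_2$. A convenient trick to bypass some of this algebra is to observe that $\int f_1(x)f_2(x)\,dx$ equals the convolution of two Gaussians evaluated at $0$ (up to a reflection), which is $\mathcal{N}(\mu_1-\mu_2; 0, \sigma_1^2+\sigma_2^2)$; since $\int \mathcal{N}(x;\mu,\sigma^2)\,dx = 1$, this integral fixes $S$ directly and avoids the explicit completion of the square for the constant term. With the lemma in hand, the multivariate / multi-factor extension needed for $\prod_{t=1}^T g_t(x) = \prod_{t=1}^T \sum_{i=1}^K \pi_t^i g_t^i(x)$ follows by distributing the product over the sums to obtain $K^T$ terms, each of which is a product of $T$ Gaussians, hence a scaled Gaussian by iterated application of the lemma; the resulting sum of scaled Gaussians is the desired scaled MoG.
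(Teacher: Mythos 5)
Your proof is correct and takes essentially the same route as the paper, which simply quotes the standard closed form $\mathcal{N}_x(\mu_1,\Sigma_1)\cdot\mathcal{N}_x(\mu_2,\Sigma_2)=c_c\,\mathcal{N}_x(\mu_c,\Sigma_c)$ without derivation; your completed-square computation and the convolution identity for the normalizing constant actually derive that formula, and your $S$, $\mu$, $\sigma^2$ agree with the paper's $c_c$, $m_c$, $\Sigma_c$ specialized to one dimension. The only difference is that the paper states the lemma for multivariate Gaussians with general covariances while you work with scalars; since the mixture components used in the paper have diagonal covariance $\sigma^2\mathbf{I}$, your argument extends coordinatewise and nothing essential is lost (indeed your version is more careful than the paper's, whose displayed $\Sigma_c=\left(\Sigma_1^{-1}+\Sigma_2^{-1}\right)$ is missing the outer inverse that your $\sigma^2=\sigma_1^2\sigma_2^2/(\sigma_1^2+\sigma_2^2)$ correctly includes).
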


\begin{proof}
Let $\mathcal{N}_{x}\left(\mu,\Sigma\right)$ denote a density of
$x$, then

\[
\mathcal{N}_{x}\left(\mu_{1},\Sigma_{1}\right)\cdot\mathcal{N}_{x}\left(\mu_{2},\Sigma_{2}\right)=c_{c}\mathcal{N}_{x}\left(\mu_{c},\Sigma_{c}\right)
\]
where:

\begin{alignat*}{1}
c_{c}= & \frac{1}{\sqrt{\det\left(2\pi\left(\Sigma_{1}+\Sigma_{2}\right)\right)}}\exp\left(-\frac{1}{2}\left(m_{1}-m_{2}\right)^{T}\left(\Sigma_{1}+\Sigma_{2}\right)^{-1}\left(m_{1}-m_{2}\right)\right)\\
m_{c}= & \left(\Sigma_{1}^{-1}+\Sigma_{2}^{-1}\right)^{-1}\left(\Sigma_{1}^{-1}m_{1}+\Sigma_{2}^{-1}m_{2}\right)\\
\Sigma_{c}= & \left(\Sigma_{1}^{-1}+\Sigma_{2}^{-1}\right)
\end{alignat*}
\end{proof}
\begin{lem}
Product of two MoGs is proportional to an MoG. \label{lem:Product-of-two}
\end{lem}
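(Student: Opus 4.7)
The plan is to expand the product of the two MoGs as a double sum of products of individual Gaussian components, apply the previous Lemma (product of two Gaussians is a scaled Gaussian) term-by-term, and then re-collect into the form ``scalar constant times MoG.''

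Concretely, let $f(x)=\sum_{i=1}^{K_f}\pi_i^f\,\mathcal{N}_x(\mu_i^f,\Sigma_i^f)$ and $g(x)=\sum_{j=1}^{K_g}\pi_j^g\,\mathcal{N}_x(\mu_j^g,\Sigma_j^g)$. First I would multiply the sums and use distributivity to write $f(x)g(x)=\sum_{i,j}\pi_i^f\pi_j^g\,\mathcal{N}_x(\mu_i^f,\Sigma_i^f)\cdot\mathcal{N}_x(\mu_j^g,\Sigma_j^g)$. Next, I would invoke Lemma \ref{lem:Product-of-two-1} on each pairwise product, obtaining $\mathcal{N}_x(\mu_i^f,\Sigma_i^f)\cdot\mathcal{N}_x(\mu_j^g,\Sigma_j^g)=c_{ij}\,\mathcal{N}_x(\mu_{ij},\Sigma_{ij})$ for explicit constants $c_{ij}\geq 0$ and parameters $(\mu_{ij},\Sigma_{ij})$ given by the formulas in that lemma. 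Substituting back yields $f(x)g(x)=\sum_{i,j}\pi_i^f\pi_j^g c_{ij}\,\mathcal{N}_x(\mu_{ij},\Sigma_{ij})$, a non-negative linear combination of $K_fK_g$ Gaussians.

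Finally, to exhibit the result as a \emph{scaled} MoG, I would set $C=\sum_{i,j}\pi_i^f\pi_j^g c_{ij}>0$ and $\tilde{\pi}_{ij}=\pi_i^f\pi_j^g c_{ij}/C$, so that $\tilde{\pi}_{ij}\ge 0$ and $\sum_{i,j}\tilde{\pi}_{ij}=1$. Then $f(x)g(x)=C\sum_{i,j}\tilde{\pi}_{ij}\,\mathcal{N}_x(\mu_{ij},\Sigma_{ij})$, which is the scalar $C$ times a bona fide MoG, proving the claim.

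The routine steps are the expansion and the pairwise application of Lemma \ref{lem:Product-of-two-1}; the only mild subtlety is the bookkeeping to ensure the new mixture weights $\tilde{\pi}_{ij}$ are properly normalized, which is resolved by pulling out the constant $C$. A corollary obtained by induction on $T$ is that $\prod_{t=1}^{T}g_t(x)$, being an iterated product of MoGs, is also a scaled MoG, which is exactly what is invoked in the derivation of the upper bound on the total timestep-wise $KL$ divergence.
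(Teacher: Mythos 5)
Your proof is correct and follows essentially the same route as the paper's: expand the product as a double sum, apply Lemma \ref{lem:Product-of-two-1} to each pairwise product of Gaussians, and pull out the constant $C=\sum_{i,j}\pi_i^f\pi_j^g c_{ij}$ to exhibit the result as $C$ times a properly normalized MoG with $K_fK_g$ modes. The concluding induction on $T$ is likewise exactly how the paper extends the lemma to show $\prod_{t=1}^{T}g_t(x)$ is a scaled MoG.
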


\begin{proof}
Let $g_{1}\left(x\right)=\stackrel[i=1]{K_{1}}{\sum}\pi_{1,i}\mathcal{N}_{x}\left(\mu_{1,i},\Sigma_{1,i}\right)$
and $g_{2}\left(x\right)=\stackrel[j=1]{K_{2}}{\sum}\pi_{2,j}\mathcal{N}_{x}\left(\mu_{2,j},\Sigma_{2,j}\right)$
are two Mixtures of Gaussians. We have:

\begin{align}
g_{1}\left(x\right)\cdot g_{2}\left(x\right)= & \stackrel[i=1]{K_{1}}{\sum}\pi_{1,i}\mathcal{N}_{x}\left(\mu_{1,i},\Sigma_{1,i}\right)\cdot\stackrel[j=1]{K_{2}}{\sum}\pi_{2,j}\mathcal{N}_{x}\left(\mu_{2,j},\Sigma_{2,j}\right)\nonumber \\
= & \stackrel[i=1]{K_{1}}{\sum}\stackrel[,j=1]{K_{2}}{\sum}\pi_{1,i}\pi_{2,j}\mathcal{N}_{x}\left(\mu_{1,i},\Sigma_{1,i}\right)\cdot\mathcal{N}_{x}\left(\mu_{2,j},\Sigma_{2,j}\right)\label{eq:big_mix}
\end{align}
By applying Lemma \ref{lem:Product-of-two-1} to Eq. (\ref{eq:big_mix}),
we have

\begin{alignat}{1}
g_{1}\left(x\right)\cdot g_{2}\left(x\right)= & \stackrel[i=1]{K_{1}}{\sum}\stackrel[,j=1]{K_{2}}{\sum}\pi_{1,i}\pi_{2,j}c_{ij}\mathcal{N}_{x}\left(\mu_{ij},\Sigma_{ij}\right)\nonumber \\
= & \:C\stackrel[i=1]{K_{1}}{\sum}\stackrel[,j=1]{K_{2}}{\sum}\frac{\pi_{1,i}\pi_{2,j}c_{ij}}{C}\mathcal{N}_{x}\left(\mu_{ij},\Sigma_{ij}\right)\label{eq:mixprop}
\end{alignat}
where $C=\stackrel[i=1]{K_{1}}{\sum}\stackrel[,j=1]{K_{2}}{\sum}\pi_{1,i}\pi_{2,j}c_{ij}$.
Clearly, Eq. (\ref{eq:mixprop}) is proportional to an MoG with $K_{1}\cdot K_{2}$
modes 
\end{proof}
\begin{thm}
$\stackrel[t=1]{T}{\prod}g_{t}\left(x\right)=\stackrel[t=1]{T}{\prod}\stackrel[i=1]{K}{\sum}\pi_{t}^{i}g_{t}^{i}\left(x\right)$
is a scaled MoG.
\end{thm}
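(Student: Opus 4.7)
The plan is to prove the statement by induction on $T$, using Lemma \ref{lem:Product-of-two} as the driving inductive step.

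For the base case $T=1$, the product reduces to $g_1(x)$, which by assumption is itself an MoG and therefore trivially a scaled MoG (with scaling constant $1$). For the inductive step, I would assume that $\prod_{t=1}^{T-1} g_t(x) = C_{T-1}\, h_{T-1}(x)$ for some scalar $C_{T-1} \geq 0$ and some bona fide MoG $h_{T-1}(x) = \sum_{j} \alpha_j \mathcal{N}_x(m_j, S_j)$ (with $\alpha_j \geq 0$ and $\sum_j \alpha_j = 1$). Then
\[
\prod_{t=1}^{T} g_t(x) = C_{T-1}\, h_{T-1}(x) \cdot g_T(x).
\]
Since $h_{T-1}$ and $g_T$ are both MoGs, Lemma \ref{lem:Product-of-two} applies directly: their product equals $C'\, h_T(x)$ for some scalar $C' \geq 0$ and some MoG $h_T$, whose modes are indexed by pairs (one mode from each factor) with weights, means, and covariances given by the Gaussian-product formulas of Lemma \ref{lem:Product-of-two-1}. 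Multiplying through, $\prod_{t=1}^{T} g_t(x) = (C_{T-1} C')\, h_T(x)$, which is again a scaled MoG, completing the induction.

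The only subtlety is keeping the bookkeeping straight: after each multiplication the number of mixture components multiplies by $K$, so the final MoG at step $T$ has up to $K^T$ components, each with weight $\frac{\prod_t \pi_t^{i_t}\, c_{i_1 \ldots i_T}}{C}$ where the $c$-factors come from the Gaussian-product normalizers in Lemma \ref{lem:Product-of-two-1}, and $C$ is the aggregate normalizer that ensures the component weights sum to one. The only step that might look like an obstacle is verifying that the nonnegativity and sum-to-one condition on the new weights is preserved: this follows because each $c_{ij} \geq 0$ (it is an exponential times a positive prefactor) and we explicitly divide by the total mass $C$ as in Eq. (\ref{eq:mixprop}), so no separate argument is needed beyond invoking Lemma \ref{lem:Product-of-two} repeatedly.

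In summary, the proof is a one-line induction once Lemmas \ref{lem:Product-of-two-1} and \ref{lem:Product-of-two} are in hand; the substantive content has already been done in those lemmas, and the theorem is essentially their iterated application.
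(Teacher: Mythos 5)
Your proof is correct and takes essentially the same route as the paper: an induction on $T$ whose inductive step is Lemma \ref{lem:Product-of-two}. The extra bookkeeping you supply (up to $K^{T}$ components, nonnegativity of the $c$-factors, renormalization by the total mass as in Eq. (\ref{eq:mixprop})) only makes explicit what the paper leaves implicit.
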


\begin{proof}
By induction from Lemma \ref{lem:Product-of-two}, we can easily show
that product of $T$ MoGs is also proportional to an MoG. That means
$\stackrel[t=1]{T}{\prod}g_{t}\left(x\right)$ equals to a scaled
MoG.
\end{proof}

\subsection{Details of Data Descriptions and Model Implementations}

Here we list all datasets used in our experiments:
\begin{itemize}
\item Open-domain datasets: 
\begin{itemize}
\item Cornell movie dialog: This corpus contains a large metadata-rich collection
of fictional conversations extracted from 617 raw movies with 220,579
conversational exchanges between 10,292 pairs of movie characters.
For each dialog, we preprocess the data by limiting the context length
and the utterance output length to 20 and 10, respectively. The vocabulary
is kept to top 20,000 frequently-used words in the dataset.
\item OpenSubtitles: This dataset consists of movie conversations in XML
format. It also contains sentences uttered by characters in movies,
yet it is much bigger and noisier than Cornell dataset. After preprocessing
as above, there are more than 1.6 million pairs of contexts and utterance
with chosen vocabulary of 40,000 words. 
\end{itemize}
\item Closed-domain datasets:: 
\begin{itemize}
\item Live Journal (LJ) user question-answering dataset: question-answer
dialog by LJ users who are members of anxiety, arthritis, asthma,
autism, depression, diabetes, and obesity LJ communities\footnote{\url{https://www.livejournal.com/}}.
After preprocessing as above, we get a dataset of more than 112,000
conversations. We limit the vocabulary size to 20,000 most common
words. 
\item Reddit comments dataset: This dataset consists of posts and comments
about movies in Reddit website\footnote{\url{https://www.reddit.com/r/movies/}}.
A single post may have multiple comments constituting a multi-people
dialog amongst the poster and commentors, which makes this dataset
the most challenging one. We crawl over four millions posts from Reddit
website and after preprocessing by retaining conversations whose utterance's
length are less than 20, we have a dataset of nearly 200 thousand
conversations with a vocabulary of more than 16 thousand words. 
\end{itemize}
\end{itemize}
We trained with the following hyperparameters (according to the performance
on the validate dataset): word embedding has size 96 and is shared
across everywhere. We initialize the word embedding from Google's
Word2Vec \cite{mikolov2013distributed} pretrained word vectors. The
hidden dimension of LSTM in all controllers is set to 768 for all
datasets except the big OpenSubtitles whose LSTM dimension is 1024.
The number of LSTM layers for every controllers is set to 3. All the
initial weights are sampled from a normal distribution with mean $0$,
standard deviation 0.$1$. The mini-batch size is chosen as 256. The
models are trained end-to-end using the Adam optimizer \cite{kingma2014adam}
with a learning rate of 0.001 and gradient clipping at 10. For models
using memory, we set the number and the size of memory slots to 16
and 64, respectively. As indicated in \cite{bowman2016generating},
it is not trivial to optimize VAE with RNN-like decoder due to the
vanishing latent variable problem. Hence, to make the variational
models in our experiments converge we have to use the $KL$ annealing
trick by adding to the $KL$ loss term an annealing coefficient $\alpha$
starts with a very small value and gradually increase up to 1.

\subsection{Full Reports on Model Performance}

\begin{table}[H]
\begin{centering}
\begin{tabular}{|c|c|c|c|c||c|}
\hline 
Model & BLEU-1 & BLEU-2 & BLEU-3 & BLEU-4 & A-glove\tabularnewline
\hline 
\hline 
Seq2Seq & 18.4 & 14.5 & 12.1 & 9.5 & 0.52\tabularnewline
\hline 
Seq2Seq-att & 17.7 & 14.0 & 11.7 & 9.2 & 0.54\tabularnewline
\hline 
DNC & 17.6 & 13.9 & 11.5 & 9.0 & 0.51\tabularnewline
\hline 
CVAE & 16.5 & 13.0 & 10.9 & 8.5 & 0.56\tabularnewline
\hline 
VLSTM & 18.6 & 14.8 & 12.4 & 9.7 & 0.59\tabularnewline
\hline 
VMED (K=1) & 20.7 & 16.5 & 13.8 & 10.8 & 0.57\tabularnewline
\hline 
VMED (K=2) & 22.3 & 18.0 & 15.2 & 11.9 & \textbf{0.64}\tabularnewline
\hline 
VMED (K=3) & 19.4 & 15.6 & 13.2 & 10.4 & 0.63\tabularnewline
\hline 
VMED (K=4) & \textbf{23.1} & \textbf{18.5} & \textbf{15.5} & \textbf{12.3} & 0.61\tabularnewline
\hline 
\end{tabular}
\par\end{centering}
\caption{Results on Cornell Movies}
\end{table}

\begin{table}[H]
\begin{centering}
\begin{tabular}{|c|c|c|c|c||c|}
\hline 
Model & BLEU-1 & BLEU-2 & BLEU-3 & BLEU-4 & A-glove\tabularnewline
\hline 
\hline 
Seq2Seq & 11.4 & 8.7 & 7.1 & 5.4 & 0.29\tabularnewline
\hline 
Seq2Seq-att & 13.2 & 10.2 & 8.4 & 6.5 & 0.42\tabularnewline
\hline 
DNC & 14.3 & 11.2 & 9.3 & 7.2 & 0.47\tabularnewline
\hline 
CVAE & 13.5 & 10.2 & 8.4 & 6.6 & 0.45\tabularnewline
\hline 
VLSTM & 16.4 & 12.7 & 10.4 & 8.1 & 0.43\tabularnewline
\hline 
VMED (K=1) & 12.9 & 9.5 & 7.5 & 6.2 & 0.44\tabularnewline
\hline 
VMED (K=2) & 15.3 & 13.8 & 10.4 & 8.8 & 0.49\tabularnewline
\hline 
VMED (K=3) & \textbf{24.8} & \textbf{19.7} & \textbf{16.4} & \textbf{12.9} & \textbf{0.54}\tabularnewline
\hline 
VMED (K=4) & 17.9 & 14.2 & 11.8 & 9.3 & 0.52\tabularnewline
\hline 
\end{tabular}
\par\end{centering}
\caption{Results on OpenSubtitles}
\end{table}

\begin{table}[H]
\begin{centering}
\begin{tabular}{|c|c|c|c|c||c|}
\hline 
Model & BLEU-1 & BLEU-2 & BLEU-3 & BLEU-4 & A-glove\tabularnewline
\hline 
\hline 
Seq2Seq & 13.1 & 10.1 & 8.3 & 6.4 & 0.45\tabularnewline
\hline 
Seq2Seq-att & 11.4 & 8.7 & 7.1 & 5.6 & 0.49\tabularnewline
\hline 
DNC & 12.4 & 9.6 & 7.8 & 6.1 & 0.47\tabularnewline
\hline 
CVAE & 12.2 & 9.4 & 7.7 & 6.0 & 0.48\tabularnewline
\hline 
VLSTM & 11.5 & 8.8 & 7.3 & 5.6 & 0.46\tabularnewline
\hline 
VMED (K=1) & 13.7 & 10.7 & 8.9 & 6.9 & 0.47\tabularnewline
\hline 
VMED (K=2) & 15.4 & 12.2 & 10.1 & 7.9 & \textbf{0.51}\tabularnewline
\hline 
VMED (K=3) & \textbf{18.1} & \textbf{14.8} & \textbf{12.4} & \textbf{9.8} & 0.49\tabularnewline
\hline 
VMED (K=4) & 14.4 & 11.4 & 9.5 & 7.5 & 0.47\tabularnewline
\hline 
\end{tabular}
\par\end{centering}
\caption{Results on LJ users question-answering}
\end{table}

\begin{table}[H]
\begin{centering}
\begin{tabular}{|c|c|c|c|c||c|}
\hline 
Model & BLEU-1 & BLEU-2 & BLEU-3 & BLEU-4 & A-glove\tabularnewline
\hline 
\hline 
Seq2Seq & 7.5 & 5.5 & 4.4 & 3.3 & 0.31\tabularnewline
\hline 
Seq2Seq-att & 5.5 & 4.0 & 3.1 & 2.4 & 0.25\tabularnewline
\hline 
DNC & 7.5 & 5.6 & 4.5 & 3.4 & 0.28\tabularnewline
\hline 
CVAE & 5.3 & 4.3 & 3.6 & 2.8 & 0.39\tabularnewline
\hline 
VLSTM & 6.9 & 5.1 & 4.1 & 3.1 & 0.27\tabularnewline
\hline 
VMED (K=1) & 9.1 & 6.8 & 5.5 & 4.3 & 0.39\tabularnewline
\hline 
VMED (K=2) & 9.2 & 7.0 & 5.7 & 4.4 & 0.38\tabularnewline
\hline 
VMED (K=3) & \textbf{12.3} & \textbf{9.7} & \textbf{8.1} & \textbf{6.4} & \textbf{0.46}\tabularnewline
\hline 
VMED (K=4) & 8.6 & 6.9 & 5.9 & 4.6 & 0.41\tabularnewline
\hline 
\end{tabular}
\par\end{centering}
\caption{Results on Reddit comments}
\end{table}

\end{document}